\documentclass[manuscript,screen]{acmart}
\usepackage{tikz}
\usetikzlibrary{arrows.meta,positioning}
\usepackage{booktabs}
\AtBeginDocument{%
  \providecommand\BibTeX{{\normalfont B\kern-0.5em{\scshape i\kern-0.25em b}\kern-0.8em\TeX}}}
\setcopyright{none}
\renewcommand\footnotetextcopyrightpermission[1]{}
\acmJournal{TIST}

\begin{document}

\title{Evaluating and Guarding Citation Faithfulness in Agentic Scientific Synthesis}

\author{Taewan Goo}
\authornote{Co-first authors.}
\orcid{0000-0001-9427-2290}
\affiliation{\institution{BioNexus}\city{Suwon}\country{Republic of Korea}}
\email{taewan.goo@bionexus.kr}
\author{Junsik Kim}
\authornotemark[1]
\affiliation{\institution{Seoul National University}\city{Seoul}\country{Republic of Korea}}
\author{Kyulhee Han}
\affiliation{\institution{BioNexus}\city{Suwon}\country{Republic of Korea}}
\author{GwonYul Jo}
\affiliation{\institution{BioNexus}\city{Suwon}\country{Republic of Korea}}
\author{Jong-Soo Kim}
\affiliation{\institution{BioNexus}\city{Suwon}\country{Republic of Korea}}
\author{Tae-Hyung Kim}
\authornote{Corresponding author and lead contact.}
\orcid{0000-0001-7280-3978}
\affiliation{\institution{BioNexus}\city{Suwon}\country{Republic of Korea}}
\email{thkim@bionexus.kr}

\renewcommand{\shortauthors}{Goo et al.}

\begin{abstract}
Agentic LLM systems such as OpenScholar and PaperQA2 read the scientific literature and return cited
answers, and both they and their benchmarks already check whether those citations hold, with a fixed
attribution model or human graders. Neither audits the reliability of that check itself. We show it is not
reliable, and that this matters. On identical agent outputs the measured unsupported-citation rate ranges
from about 3\% to about 18\% depending only on the verifier's strictness, and although verifiers agree on
which citations are \emph{supported}, they disagree on which to \emph{flag} (negative-specific agreement 0.27
to 0.30), so no single flag set is trustworthy and cross-paper comparison is invalid without a named verifier
and protocol.

We present a gold-anchored evaluation protocol and a deployable guard that make this behavior measurable and
bounded. The protocol validates the verifier, measures re-attribution, and calibrates a guarantee against
human gold rather than another model's verdict; the verifier is a swappable instrument chosen on cost (recall
0.94 on the supported class, held out), and re-attribution is a commodity step where a deterministic BM25
matches the best open generator. The guard adds a split-conformal layer placing a distribution-free,
finite-sample bound on truly unsupported citations that slip past a chosen flagging rule, a guarantee on
catch rate rather than conclusion correctness. The bound holds on held-out gold, and we identify and quantify
the condition governing its transfer to deployment, calibration-negative difficulty, with a concrete
recalibration recipe, left untested by prior conformal-factuality work.

Validated across four open 27--35B models and three agentic pipelines on public benchmarks (SciFact, QASA,
PubMedQA), with confidence intervals on every headline number, the protocol and guard ship as an open
single-GPU kit.
\end{abstract}

% CCS concepts intentionally omitted at submission; to be generated with the official ACM CCS tool
% (https://dl.acm.org/ccs) and inserted for the camera-ready, per ACM author instructions.

\keywords{citation faithfulness, attribution evaluation, conformal prediction, LLM-as-judge reliability,
trustworthy AI}

\maketitle

\section{Introduction}
Agentic literature-synthesis systems such as OpenScholar~\cite{asai2026openscholar} and
PaperQA2~\cite{skarlinski2024paperqa2} now operate in a closed loop of retrieval, reasoning, and writing,
returning answers with citations to the scientific record. Such systems are part of a broader move toward
agentic AI for science~\cite{wang2023ai4science,gao2024cell,gottweis2025aicoscientist,ghareeb2025robin}.
For these agents the citation is the trust contract, telling the reader that a specific claim is backed
by a specific source. A growing body of work checks that contract automatically and reports that AI
citations are frequently unsupported~\cite{gao2023alce,wu2025sourcecheckup}, and that even a factually
correct citation may not faithfully reflect the passage the model actually
used~\cite{wallat2024correctness}. The agentic systems themselves do this
too. OpenScholar scores citation support with a fixed attribution model and PaperQA2 with human graders. Yet
these evaluations take the checker for granted, reporting a citation-support number without asking whether
that judgment is itself reliable.

Evaluating this behavior at deployment scale is itself an intelligent-systems problem, because the judgment
must then be made by a model rather than by the human graders a one-off benchmark can afford. We begin from
an uncomfortable observation: \emph{the reported unsupported-citation rate depends on the verifier, and the
verifiers do not agree}. Deciding whether a passage ``supports'' a claim is a
judgment, and automated verifiers make it inconsistently and in disagreement with one another. This is one
instance of a fragility already documented for model graders in
general~\cite{li2024attributionbench,haldar2025rating}: whenever an evaluation hands a judgment to a
model, the reported metric inherits that judge's idiosyncrasies. Three gaps follow for benchmarking agentic
citation faithfulness. \textbf{(G1)} The reported unsupported-citation rate is verifier-dependent, so a
single number does not compare across studies. \textbf{(G2)} Verifiers agree on what is \emph{supported} but
disagree on what to \emph{flag}, so no single deployment flag set can be trusted. \textbf{(G3)} No method
places a distribution-free guarantee on how many truly unsupported citations slip through a chosen flagging
rule. Our contribution is not the observation behind G1 and G2 but its consequence for benchmarking, and a
remedy: anchoring the metric on human gold to make G1 and G2 measurable, and wrapping an imperfect verifier
in a conformal guarantee that closes G3.

If deployment judgments cannot be trusted, the responsible move is to anchor evaluation on human gold
labels, where ``supported'' has a checkable meaning. We do this at three points: we \emph{validate} the
verifier against gold rather than choosing it by reputation; we \emph{measure} re-attribution accuracy
against gold-labeled supporting passages rather than against another verifier; and we \emph{calibrate} a
distribution-free guarantee on gold. The result is both an evaluation protocol and a deployable guard that
wraps an imperfect verifier in a finite-sample bound. Everything ships as an open single-GPU kit.
Figure~\ref{fig:overview} summarizes the design end to end. The result is a new evaluation protocol,
benchmark methodology, and reliability guard for a critical agentic behavior, faithful citation, in
scientific-synthesis agents, treated as an intelligent-systems evaluation problem in which the deployment
metric is itself produced by a model. The work spans formal analysis (a distribution-free finite-sample guarantee
with a proof), evaluation methodology (the gold-anchored protocol), and a deployable system (a single-GPU
lab-audit kit).

\paragraph{Contributions.}
\begin{itemize}
\item \textbf{An agentic-evaluation finding (G1, G2).} On agentic scientific synthesis we quantify how
unreliable deployment citation-faithfulness judgments are: the same agent outputs score about 3\% to about 18\%
unsupported across five gold-validated verifiers, and although the verifiers agree on which citations are
supported, they disagree on which to flag (negative-specific agreement 0.27 to 0.30 among the three
continuous-scored verifiers). That model graders are
imperfect is known; we show it leaves no single deployment flag set trustworthy and makes cross-paper
comparison invalid without a named verifier and protocol.
\item \textbf{A gold-anchored evaluation protocol.} We validate the verifier, measure re-attribution, and
calibrate the guarantee against human gold labels rather than against another verifier's verdict, turning
an unreliable judgment into a checkable measurement.
\item \textbf{A distribution-free guard (G3).} Split-conformal calibration on gold converts an imperfect,
moderate-agreement verifier into a finite-sample bound on unflagged-unsupported citations at a chosen
tolerance; the bound holds empirically on held-out gold, and we show its transfer is governed by
calibration-negative difficulty, so a deployment recalibrates on target-domain negatives rather than
inheriting our threshold.
\item \textbf{An open, reproducible kit.} The protocol and guard run on one 80GB GPU or on CPU and are
released under the MIT license at \url{https://github.com/GooTec/citation-guard}, so a laboratory can audit
an agentic synthesis system on hardware it already has.
\end{itemize}

\begin{figure*}[t]\centering
\includegraphics[width=\textwidth]{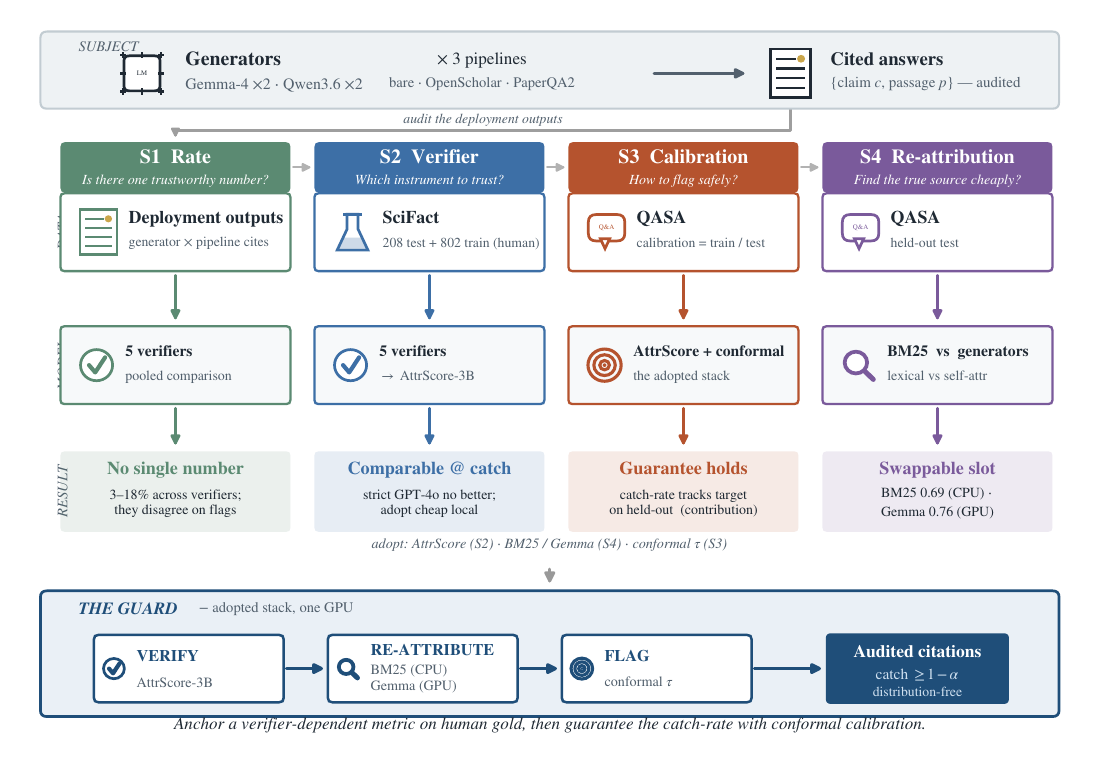}
\caption{Study design. The agentic systems under audit (top) produce cited answers; four evaluation stages
then anchor a verifier-dependent metric on human gold and guarantee the catch rate. \textbf{S1 Rate:} the
unsupported-citation rate has no single value across verifiers. \textbf{S2 Verifier:} on SciFact gold the
verifiers separate comparably at a matched catch rate, so the cheap local AttrScore-3B is adopted on cost.
\textbf{S3 Calibration:} split-conformal calibration on QASA gold turns the verifier score into a
distribution-free catch-rate guarantee (the contribution). \textbf{S4 Re-attribution:} recovering the true
source is a swappable commodity slot (lexical BM25 on CPU, or a generator on GPU). The adopted guard
(verify $\rightarrow$ re-attribute $\rightarrow$ flag) runs on one GPU.}
\label{fig:overview}
\end{figure*}

\section{Related Work}

\subsection{Agentic AI for scientific synthesis}
LLM agents increasingly conduct literature synthesis and discovery, from OpenScholar and
PaperQA2~\cite{asai2026openscholar,skarlinski2024paperqa2} to autonomous chemistry and biology
agents~\cite{boiko2023coscientist,bran2024chemcrow,swanson2025virtuallab,ghareeb2025robin} and
co-scientist systems~\cite{gottweis2025aicoscientist}. These pipelines retrieve, reason, and write in a
loop, often with self-reflection~\cite{shinn2023reflexion,asai2024selfrag}, meta-knowledge-guided
self-correction~\cite{zhang2026selfcorrect}, and corrective retrieval~\cite{yan2024crag}. Their scientific value rests on faithful citation, yet studies warn that
agentic reasoning can also produce confident, well-formed, but unsupported
claims~\cite{messeri2024illusions,wallat2024correctness}.

\subsection{Evaluating and benchmarking agentic systems}
A large benchmarking literature evaluates agents on task success: AgentBench~\cite{liu2023agentbench},
GAIA~\cite{mialon2024gaia}, $\tau$-bench~\cite{yao2024taubench}, AgentBoard~\cite{ma2024agentboard}, tool
use~\cite{patil2024bfcl,guo2024stabletoolbench}, and software engineering~\cite{jimenez2024swebench}, and
surveys chart LLM and LLM-agent evaluation broadly~\cite{chang2024evalsurvey,guan2026agenteval}. A recent
certification scheme issues assurances for LLM-based applications~\cite{bena2026certification}. These measure
whether an agent reaches a goal or certify high-level application properties. Our concern is orthogonal and
finer-grained: not whether the agent succeeds, but whether the evidence it cites actually supports what it
writes, and crucially whether \emph{that} measurement is itself reliable. Where a certification scheme attests
arbitrary application-level properties, we place a distribution-free, finite-sample bound on one concrete,
passage-level quantity, the unsupported-citation slip-through rate, and treat the reliability of the metric
as a first-class object.

\subsection{LLM-as-judge reliability}
Model-graded evaluation is now pervasive~\cite{zheng2023mtbench,kim2024prometheus2}, and so are its known
failure modes: self-preference~\cite{panickssery2024selfpref}, sensitivity to spurious features such as
length~\cite{dubois2024lengthcontrol}, self-inconsistency~\cite{haldar2025rating}, and the general
difficulty of automatic attribution evaluation~\cite{li2024attributionbench}. Proposed mitigations include
specialized judges~\cite{kim2024prometheus2}, reward-model benchmarks~\cite{lambert2024rewardbench},
judge benchmarks~\cite{tan2024judgebench}, and juries of diverse models~\cite{verga2024juries}. We add a
quantitative account of judge disagreement specifically for citation support on agentic outputs, and,
rather than seek a better judge, we anchor the metric on gold and bound the residual error. A jury of the
disagreeing verifiers does not resolve this: averaging idiosyncratic thresholds still has no ground-truth
anchor, and a majority-vote flag set inherits the same untrustworthy operating point we document; gold
anchoring supplies the reference that consensus cannot.

\subsection{Citation and attribution evaluation}
Attribution evaluation spans the AIS framework~\cite{rashkin2023ais}, ALCE~\cite{gao2023alce},
TRUE~\cite{honovich2022true}, AttrScore~\cite{yue2023attrscore}, RAGAS~\cite{es2024ragas},
CiteEval~\cite{xu2025citeeval}, and medical citation auditing~\cite{wu2025sourcecheckup}. A separate line
verifies whether a cited reference \emph{exists} and whether its metadata is correct, catching fabricated
references~\cite{shi2026citeaudit,lee2026citecheck,chelli2024hallucination}; our concern is the orthogonal
one, whether a genuinely existing passage \emph{supports} the claim attached to it. AttributionBench shows
this support judgment is hard even for strong models~\cite{li2024attributionbench}, which motivates
treating any single verifier's verdict as uncertain.

\subsection{Conformal and selective prediction}
Conformal prediction provides distribution-free, finite-sample guarantees~\cite{angelopoulos2021conformal}
and has been applied to language-model factuality~\cite{mohri2024conformal}, conformal language
modeling~\cite{quach2024conformal}, and hallucination abstention~\cite{abbasiyadkori2024conformal}, in the
tradition of selective prediction~\cite{elyaniv2010selective,geifman2017selectivenet}. The closest work
bounds answer-level claim correctness; we instead place a distribution-free bound on the passage-level
unsupported-\emph{citation} slip-through rate, a quantity specific to citation faithfulness.

\section{Preliminaries and Problem Formulation}

\subsection{Notation and definitions}
An agentic synthesis system answers a query by retrieving a set of passages $P = \{p_1, \dots, p_m\}$ and
returning an answer composed of cited sentences, which we decompose into claim--citation pairs $(c, p_i)$
where claim $c$ is a sentence and $p_i \in P$ is the passage it cites. Table~\ref{tab:notation} collects the
notation.

\begin{definition}[Verifier and support]\label{def:verifier}
A \emph{verifier} is a function $v(c,p) \in \{0,1\}$ that judges whether passage $p$ supports claim $c$,
optionally exposing a continuous score $s(c,p) \in [0,1]$ that estimates the probability of support. A
citation $(c,p_i)$ is \emph{unsupported under $v$} if $v(c,p_i)=0$. A \emph{gold} label $y(c,p)\in\{0,1\}$ is
a human annotation of support, available only on benchmark data.
\end{definition}

\begin{definition}[Unsupported-citation rate]\label{def:rate}
The \emph{unsupported-citation rate} of a system under verifier $v$ is the fraction of its claim--citation
pairs that are unsupported under $v$. As a function of both the outputs and $v$, it is meaningful only
together with the verifier and protocol that produced it.
\end{definition}

\begin{table}[t]
\caption{Notation used throughout.}
\label{tab:notation}
\begin{tabular}{ll}
\toprule
Symbol & Meaning \\
\midrule
$c$ & a claim (a generated sentence) \\
$P=\{p_1,\dots,p_m\}$ & passages retrieved for the query \\
$(c,p_i)$ & claim--citation pair; $p_i\in P$ is the cited passage \\
$v(c,p)\in\{0,1\}$ & verifier verdict ($1$ = supports) \\
$s(c,p)\in[0,1]$ & continuous verifier score (probability of support) \\
$y(c,p)\in\{0,1\}$ & human gold support label (benchmark only) \\
$\alpha$ & tolerance; the target catch rate is $1-\alpha$ \\
$\tau$ & flag threshold; flag the citation if $s(c,p)\le\tau$ \\
$n_{\text{cal}}$ & number of calibration (unsupported-class) scores \\
\bottomrule
\end{tabular}
\end{table}

\subsection{Problem statement}
Let a deployed agent produce claim--citation pairs whose true support status is unknown. A \emph{gold}
label $y(c,p) \in \{0,1\}$ is a human annotation of support, available only on benchmark data. Two problems
follow. (1) \emph{Measurement.} The reported unsupported rate is a function of both the outputs and the
verifier; we ask how much it varies with the choice of $v$ and how much verifiers agree on the per-citation
verdict. (2) \emph{Guarantee.} Given an imperfect verifier, we seek a flagging rule that, for a chosen
tolerance $\alpha$, catches at least a $1-\alpha$ fraction of the truly unsupported citations, with a
finite-sample, distribution-free guarantee calibrated on gold. The first problem establishes that no single
verifier verdict can be trusted; the second turns that imperfect verifier into a bounded triage.

\section{Method}

\subsection{The verifier as a swappable measurement instrument}
Any verifier is an automated judge that emits a continuous score $s(c,p)$ separating supported from
unsupported claims, and different verifiers make different errors, so the verifier must be validated against
gold before we trust anything it reports. Because the conformal layer below is what controls the catch rate
on unsupported citations, the verifier need not be the strictest judge, only a good separator. We therefore
compare gold-validated verifiers at a \emph{matched catch rate} and, finding their separation comparable,
adopt the attribution-tuned 3B model AttrScore~\cite{yue2023attrscore} on cost because it runs locally. Its
separation on gold is the signal the conformal layer turns into a guarantee (Figure~\ref{fig:verifier}).

\subsection{Quantifying verifier disagreement}
To measure how much the metric depends on the instrument, we score identical agent outputs with a panel of
gold-validated verifiers spanning the frontier, from high-recall AttrScore to a high-specificity
natural-language-inference (NLI) judge, with RAGAS~\cite{es2024ragas} and a frontier judge in between. We
report (i) the unsupported rate under each verifier and (ii) class-specific pairwise agreement on a fixed
sample of deployment citations. Because the supported class is highly prevalent, we read
\emph{negative-specific} agreement (agreement on what to flag) rather than Cohen's $\kappa$ alone, which
the base rate depresses (the kappa paradox; Supplementary~S2).

\subsection{The conformal guard}
The guard processes each cited sentence in three steps (Figure~\ref{fig:overview}). It \emph{verifies} the
claim against its cited passage with the 3B verifier; if unsupported, it attempts to \emph{re-attribute}
the claim to another provided passage that supports it, keeping the claim and fixing only the pointer; what
cannot be repaired it \emph{flags} rather than deletes. The verifier is imperfect, so a hard threshold
would have unknown error on new data. Split conformal removes that uncertainty. On a held-out gold
calibration set we choose a tolerance $\alpha$ and set the flag threshold $\tau$ to the appropriate
quantile of the unsupported-class scores with a finite-sample correction:
\begin{equation}
\tau = s_{(k)}, \qquad k = \big\lceil (n_{\text{cal}}+1)(1-\alpha) \big\rceil,
\end{equation}
where $s_{(1)} \le \dots \le s_{(n_{\text{cal}})}$ are the sorted scores of the calibration unsupported
citations. Flagging every citation with $s(c,p) \le \tau$ then catches at least $1-\alpha$ of the truly
unsupported citations, in finite samples and without any assumption on the score distribution; the only
assumption is exchangeability between calibration and deployment.

\begin{proposition}[Finite-sample catch rate]\label{prop:coverage}
Let the calibration unsupported-class scores $s_1,\dots,s_{n_{\text{cal}}}$ and a fresh unsupported-class
score $s_{\mathrm{new}}$ be exchangeable, and set $\tau=s_{(k)}$ with $k=\lceil(n_{\text{cal}}+1)(1-\alpha)\rceil$.
Then the rule ``flag if $s\le\tau$'' catches the new unsupported citation with probability at least $1-\alpha$:
$\Pr[\,s_{\mathrm{new}}\le\tau\,]\ge 1-\alpha$.
\end{proposition}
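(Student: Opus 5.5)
The plan is the standard rank (exchangeability) argument of split conformal prediction, specialized to the one-sided rule ``flag if $s\le\tau$''. Write $n=n_{\text{cal}}$ and consider the augmented multiset $\{s_1,\dots,s_n,s_{\mathrm{new}}\}$ of $n+1$ exchangeable scores. I would first dispose of the degenerate case $k>n$, which occurs when $\alpha<1/(n+1)$. There I adopt the usual convention $\tau=s_{(k)}:=+\infty$, so every citation is flagged and the claim holds trivially. From then on I assume $1\le k\le n$.

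The key step is a deterministic reduction from the event $\{s_{\mathrm{new}}\le\tau\}$ to a rank event. Let $R$ be the rank of $s_{\mathrm{new}}$ among the $n+1$ scores. To avoid fussing with ties, I define $R$ as the number of indices $j$ with $s_j<s_{\mathrm{new}}$, plus one, counting over all $n+1$ scores including $s_{\mathrm{new}}$ itself; this is the \emph{smallest} rank consistent with ties. I would then show that $R\le k$ implies $s_{\mathrm{new}}\le s_{(k)}$.

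\begin{itemize}
\item Suppose instead that $s_{\mathrm{new}}>s_{(k)}$.
\item Then at least $k$ calibration scores lie strictly below $s_{\mathrm{new}}$.
\item Hence $R\ge k+1$, a contradiction.
\end{itemize}

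Consequently $\Pr[s_{\mathrm{new}}\le\tau]\ge\Pr[R\le k]$.

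Next I lower-bound $\Pr[R\le k]$ using exchangeability. Without ties, $R$ is uniform on $\{1,\dots,n+1\}$, because each of the $n+1$ positions is equally likely to be the fresh point. With ties, the minimal-rank convention only makes $R$ stochastically smaller. The cleanest way to see this is to break ties by i.i.d.\ uniform random labels independent of the scores; the resulting strict rank $\tilde R$ is exactly uniform and satisfies $R\le\tilde R$. Either way,
\[
\Pr[R\le k]\;\ge\;\frac{k}{n+1}\;\ge\;\frac{(n+1)(1-\alpha)}{n+1}\;=\;1-\alpha,
\]
where the middle inequality is the definition of the ceiling in $k=\lceil(n+1)(1-\alpha)\rceil$. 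Combining this with the reduction gives the proposition.

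The step I expect to need the most care is the tie-handling and the exact-uniformity claim. Exchangeability of the vector $(s_1,\dots,s_n,s_{\mathrm{new}})$ gives uniformity of the strict rank only after a symmetric tie-break. I would verify that the random tie-break preserves exchangeability, which holds because the labels are i.i.d.\ and independent of the scores, and that it can only increase the rank relative to the minimal-rank convention. I would also note in a remark that the guarantee is marginal: it averages over the calibration draw and the fresh point, not conditional on a fixed $\tau$.
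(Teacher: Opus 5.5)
Your proposal is correct and follows the same route as the paper: the standard split-conformal rank argument, which bounds $\Pr[s_{\mathrm{new}}\le s_{(k)}]$ below by $k/(n_{\text{cal}}+1)\ge 1-\alpha$ using the uniformity of the rank of $s_{\mathrm{new}}$ among the $n_{\text{cal}}+1$ exchangeable scores. Your minimal-rank reduction with an independent random tie-break makes rigorous the tie-handling that the paper only asserts (``break ties conservatively''), and your convention $\tau=+\infty$ for $k>n_{\text{cal}}$ covers an edge case the paper leaves implicit.
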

\begin{proof}
By exchangeability the rank of $s_{\mathrm{new}}$ among the $n_{\text{cal}}+1$ scores is uniform on
$\{1,\dots,n_{\text{cal}}+1\}$, so $\Pr[\,s_{\mathrm{new}}\le s_{(k)}\,]\ge k/(n_{\text{cal}}+1)\ge 1-\alpha$
by the choice of $k$. This is the standard split-conformal argument~\cite{angelopoulos2021conformal} applied
to the unsupported-class score distribution; it gives coverage $\ge 1-\alpha$ under exchangeability and is
exact for continuous, tie-free scores. The discrete verifier score can tie; we break ties conservatively
(treating $s_{\mathrm{new}} \le \tau$ at equality), which preserves the $\ge 1-\alpha$ bound. This is what
earns the word ``guarantee'' even when gold is scarce.
\end{proof}
The bound is on the catch rate, the safety axis; we verify below that it holds empirically on held-out gold
(Figure~\ref{fig:guard}A).

\subsection{Re-attribution}
When a citation is flagged, the guard ranks the other provided passages, proposes the top-ranked one as a
replacement pointer, and re-verifies that candidate before the pointer is moved. The ranker is a swappable
slot: we adopt a deterministic lexical BM25 by default and compare it against the verifier's own attribution
score $s(c,p_j)$ and against the generators asked to re-attribute their own citations. We evaluate
re-attribution against gold: on data where a human-labeled supporting passage exists, we measure how often
the gold passage is ranked first or within the top $k$, scored against truth rather than against the
verifier that did the flagging. Because the ranker only proposes and the verifier disposes, a lexical
default is safe even though it cannot itself distinguish support from contradiction.

\section{Experiments}

\subsection{Setup}
\paragraph{Agents and conditions.} We audit four open instruction-tuned models in the 27--35B range, two
dense and two mixture-of-experts from the Gemma-4 and Qwen3.6 families~\cite{gemma4_2026,qwen36_2026}, run
locally on a single GPU, across two published agentic pipelines, OpenScholar and PaperQA2, against a bare
no-retrieval floor. We hold the retrieved context fixed and reproduce each system's published prompting, so
the rates reflect generation-time citation behavior rather than each system's own retrieval. Running every
generator locally keeps the audit fully reproducible on commodity hardware. The deployment citations we
audit for the rate and disagreement analyses are these systems' answers on the ScholarQABench multi-paper
tasks~\cite{asai2026openscholar}.

\paragraph{Verifiers.} Five gold-validated verifiers span the recall--specificity frontier: AttrScore-3B
(high recall), an OpenScholar post-hoc judge, GPT-4o, RAGAS, and a DeBERTa-NLI judge (high specificity).

\paragraph{Gold data.} Verifier validation uses SciFact ($n=208$ human-labeled claim--passage test pairs,
with a disjoint $n=802$ split used only for prompt selection). Re-attribution and conformal calibration use
the full QASA test set ($n=1375$), a question-answering benchmark over research papers~\cite{lee2023qasa}
with human-labeled supporting passages. A scope control uses PubMedQA ($n=843$).

\paragraph{Metrics.} We report per-class recall and specificity (not a single accuracy, which moves with
class balance), Cohen's $\kappa$ with bootstrap 95\% confidence intervals (CIs), unsupported-citation
rates, re-attribution recall@$k$ with binomial 95\% CIs, and conformal catch rate against the declared
target.

\subsection{The verifier, a swappable instrument chosen on cost}
\begin{figure}[tbp]\centering
\includegraphics[width=\linewidth]{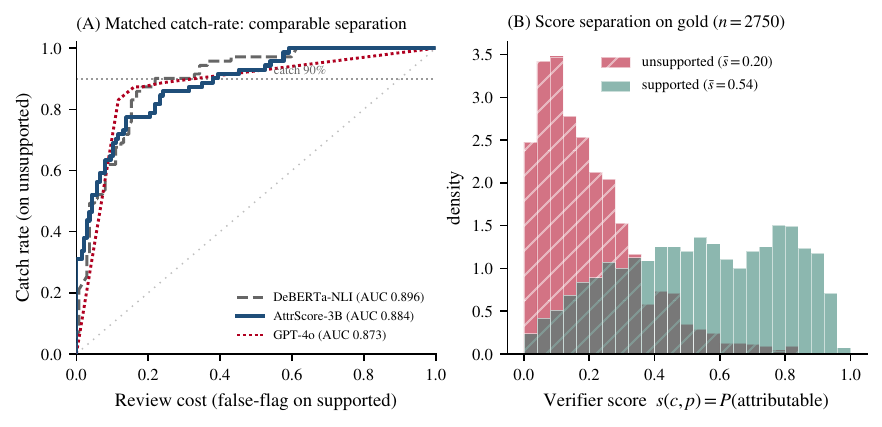}
\caption{The verifier as a swappable instrument, validated on gold. (A) Matched catch-rate frontier on
SciFact: at any common catch rate on unsupported citations the three verifiers separate comparably (AUC and
review cost overlap), so a strict frontier judge buys no advantage and the cheap local AttrScore-3B is
adopted on cost. (B) The continuous score separates supported from unsupported claims on gold (means 0.54
versus 0.20), the signal the conformal layer turns into a guarantee.}
\label{fig:verifier}
\end{figure}
Because the conformal layer controls the catch rate, the verifier need only separate the two classes well,
so we compare verifiers at a \emph{matched catch rate} rather than at their native thresholds. On SciFact
gold the continuous-scored verifiers separate comparably (Figure~\ref{fig:verifier}A): their ROC-AUC values
overlap (AttrScore-3B 0.88 [0.84, 0.93], DeBERTa-NLI 0.90 [0.85, 0.94], GPT-4o 0.87 [0.82, 0.92]; bootstrap
95\% CIs, SciFact $n=208$), and at any common catch rate their review cost is comparable, so a strict
frontier judge such as GPT-4o buys no separation advantage over the local 3B model. Given comparable separation, the verifier is a
swappable instrument and the deciding factor is cost: we adopt AttrScore-3B because it runs locally on one
GPU. Its native operating point is high recall on the supported class (0.90 on SciFact,
Table~\ref{tab:verifier}; 0.94 on a disjoint held-out split, $n=802$), which suits a guard that must not
discard genuine citations, while the per-class trade-offs of the stricter verifiers (DeBERTa recall
0.25/specificity 0.97; GPT-4o recall 0.46) explain the inflated rates they report in
Figure~\ref{fig:decay}. Supplementary~S1 gives the full per-verifier metrics, the matched-catch comparison,
the prompt-selection procedure, and the held-out validation.

\begin{table}[t]
\caption{The verifier frontier on SciFact gold ($n=208$): recall on the supported class (the safety axis)
versus specificity. AttrScore-3B sits at the high-recall end. The adopted verifier is re-validated on a
disjoint held-out split (recall 0.94, $n=802$); full metrics, Cohen's $\kappa$ with CIs, and the
prompt-selection comparison are in Supplementary~S1.}
\label{tab:verifier}
\begin{tabular}{lcc}
\toprule
Verifier & Recall (supported) & Specificity \\
\midrule
AttrScore-3B (adopted) & \textbf{0.90} & 0.54 \\
OpenScholar post-hoc   & 0.50 & 0.92 \\
GPT-4o                 & 0.46 & 0.92 \\
RAGAS                  & 0.30 & 0.96 \\
DeBERTa-NLI            & 0.25 & 0.97 \\
\bottomrule
\end{tabular}
\end{table}

\subsection{The unsupported rate is verifier-dependent}
\begin{figure}[tbp]\centering
\includegraphics[width=0.7\linewidth]{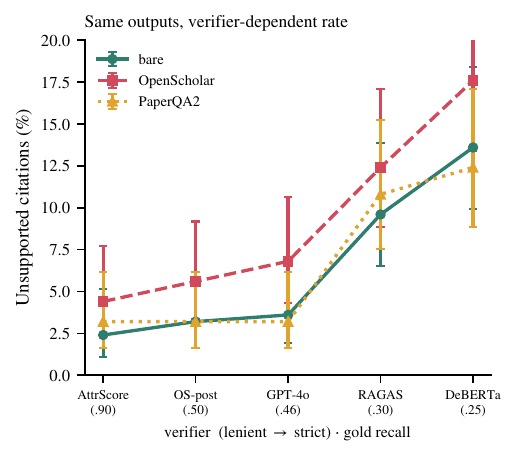}
\caption{The unsupported-citation rate has no single value: on the same agent outputs it ranges threefold
to sixfold across five gold-validated verifiers ordered by recall. Most of the climb is the strict
verifiers falsely rejecting genuinely supported citations.}
\label{fig:decay}
\end{figure}
On the same agent outputs the measured unsupported rate rises monotonically as the verifier gets stricter,
from about 3\% at the bare floor under the high-recall verifier to roughly 14--18\% under a strict NLI
judge, with the agentic pipelines a few points above the floor at each setting (Figure~\ref{fig:decay}).
Much of that climb is not extra unfaithfulness but extra false rejection: the strict verifiers have low
recall on genuinely supported claims (0.25 on gold), so their high rates partly count supported citations.
The rates are pooled over the four open models and three pipelines and stay single-digit at the lenient end
while climbing with verifier strictness, so the effect is a property of the verifier, not of any one
generator. We therefore report no single rate without naming its operating point; two unsupported rates from
different papers are not comparable unless they name the same verifier and protocol.

\subsection{Verifiers disagree on what is unsupported}
Disagreement runs deeper than the headline rate, and it is specifically about what to \emph{flag}. On 300
cited sentences from the deployed pipelines, three gold-validated verifiers spanning the
recall--specificity range (AttrScore-3B, DeBERTa-NLI, and the GPT-4o judge; the two binary-only judges,
RAGAS and OpenScholar post-hoc, are omitted from this pairwise analysis) agree strongly on which
citations are supported (positive-specific agreement 0.90--0.96) but agree on which to flag as unsupported
only weakly (negative-specific agreement 0.27--0.30); concretely, 60 to 80\% of the citations one verifier
flags, another calls supported. (Full agreement statistics, including observed agreement, Gwet's AC1, and Cohen's $\kappa$ with
the base-rate caveat and CIs, are in Supplementary~S2.) The verifiers thus encode genuinely different flag
thresholds: there is no single deployment flag set to trust. Everything load-bearing below, the verifier's
validation, the re-attribution accuracy, and the conformal threshold, is therefore anchored on human gold
rather than on a model's idiosyncratic cutoff.

\subsection{The conformal guarantee holds in-distribution}
\begin{figure}[tbp]\centering
\includegraphics[width=\linewidth]{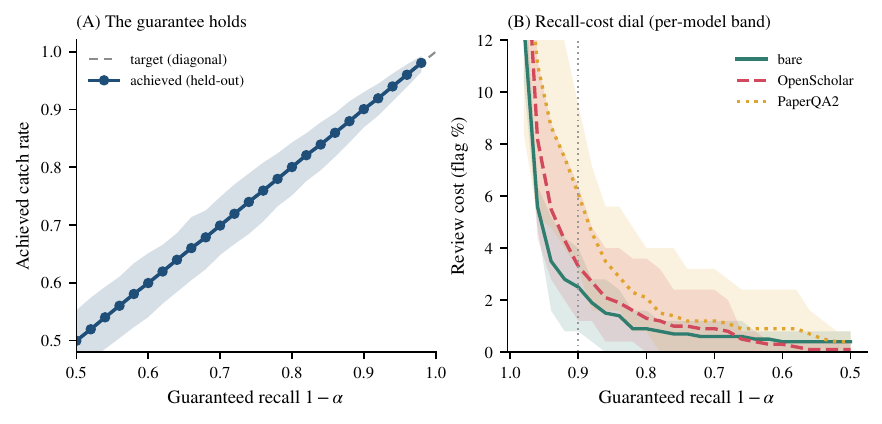}
\caption{The conformal guarantee, calibrated on QASA gold ($n=1375$). (A) The guarantee holds: the achieved
catch rate tracks the declared target $1-\alpha$ on repeated held-out splits across the operating range
(target 0.90/0.94/0.98 $\rightarrow$ achieved 0.90/0.94/0.98). (B) The recall-cost dial, per open model:
declaring a guaranteed catch rate fixes the review budget in advance. The verify $\rightarrow$ re-attribute
$\rightarrow$ flag pipeline itself is shown in Figure~\ref{fig:overview}.}
\label{fig:guard}
\end{figure}
Calibrated on QASA gold (the full test set, 1375 supported and 1375 unsupported pairs), the conformal flag rule's achieved
catch rate tracks the declared target across the operating range on repeated held-out splits
(Figure~\ref{fig:guard}A), so the distribution-free bound is empirical, not only asymptotic. This coverage
is \emph{marginal} over the calibration draw: an individual split can fall below target, which is why we
calibrate on as much gold as is available and recalibrate when the target setting differs. The
finite-sample correction is what earns the word ``guarantee'' when gold is scarce: at a calibration size
of 25 a naive empirical-quantile threshold under-covers (0.85 at a 0.90 target) while the conformal rule
holds (0.92); the gap closes by a calibration size of about 200 (Supplementary~S3). At large calibration
sizes the two coincide, so the value of conformal here is distribution-free validity, not a numerical
gain.

\paragraph{Calibration-negative difficulty governs transfer.} The bound is distribution-free only under
exchangeability between calibration and deployment negatives, and that assumption is consequential. Our
calibration negatives are QASA in-paper distractors, which are easy: the verifier scores them low (mean
$P(\text{attributable})=0.20$). Genuinely mis-cited claims are harder, because the cited passage is on
topic but does not support the specific claim; on SciFact's human-labeled unsupported claims the verifier
scores them far higher (mean $0.50$). A threshold $\tau$ calibrated on the easy distractors to catch
$90\%$ of them therefore catches only $37\%$ of the harder human-labeled negatives ($44\%$ and $61\%$ at
the $94\%$ and $98\%$ targets; Supplementary~S6). The conformal machinery is not at fault: recalibrating
$\tau$ on negatives drawn from the deployment distribution restores the guarantee, at a higher review
budget. The operational rule is therefore that the calibration set must resemble the deployment negatives;
the QASA coverage above is an in-distribution demonstration, and a deployment should recalibrate on
target-domain negatives rather than inherit the QASA threshold. Identifying and quantifying this
calibration-negative-difficulty condition is itself part of the contribution: prior conformal-factuality
work names distribution shift as a limitation but does not characterize which shift matters, whereas we
isolate one such condition empirically and show how to restore the guarantee.

The guarantee pays for safety in review effort, the genuine citations it also flags
(Figure~\ref{fig:guard}B). At a 90\% guarantee, pooled across the four open models, the flag rate at the high-recall
operating point is about 3\% for OpenScholar and 6\% for PaperQA2, against about 2.5\% for the bare model:
a few cited sentences per hundred sent for review. Table~\ref{tab:permodel} breaks the budget down by model:
it stays single-digit at the bare floor for every open model and rises with pipeline agency, with no single
model driving the pooled figure.

\begin{table}[t]
\caption{Per-model review budget: flag rate (\%) at a 90\% guaranteed catch rate, by generator and pipeline
(QASA-calibrated $\tau$). The budget stays single-digit at the bare floor for every open model and rises with
pipeline agency; no single model drives the pooled rate. Each cell is $n=250$ deployment citations; Wilson
95\% CIs span a few points (e.g. bare Gemma-4-26B $0.8\%$ [0.2, 2.9], PaperQA2 Qwen3.6-35B $9.5\%$ [6.5, 13.8]).}
\label{tab:permodel}
\begin{tabular}{lccc}
\toprule
Generator & bare & OpenScholar & PaperQA2 \\
\midrule
Gemma-4-31B & 3.2 & 2.8 & 4.0 \\
Gemma-4-26B & 0.8 & 1.2 & 3.2 \\
Qwen3.6-27B & 2.0 & 6.4 & 7.6 \\
Qwen3.6-35B & 4.0 & 2.8 & 9.5 \\
\midrule
Pooled      & 2.5 & 3.3 & 6.1 \\
\bottomrule
\end{tabular}
\end{table} Even a single-digit unsupported rate matters at
synthesis scale, where one survey can carry hundreds of cited sentences; the value is not a large defect
rate but a bounded, declared review budget over many citations.

\subsection{Re-attribution is a swappable, commodity slot}
\begin{figure}[tbp]\centering
\includegraphics[width=\linewidth]{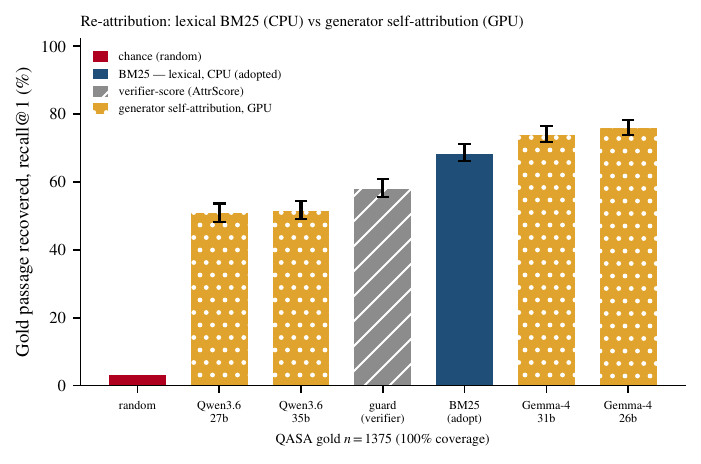}
\caption{Re-attribution validated against gold (QASA, $n=1375$, recall@1, 100\% coverage). Recovering the
supporting passage is a swappable commodity slot: a deterministic lexical BM25 on CPU (0.69) matches the
verifier score (0.58) and is reliable, while the open generators run locally on GPU vary from 0.51 (Qwen) to
0.76 (Gemma). All crush the 3\% random baseline. Wilson 95\% CIs.}
\label{fig:reattr}
\end{figure}
When a citation is flagged, recovering its true supporting passage among the provided candidates is an easy
retrieval task, and it is not where the contribution lies. On QASA gold ($n=1375$; Table~\ref{tab:reattr},
Figure~\ref{fig:reattr}) every reasonable ranker crushes the 3\% random baseline and they form a clear
order: a deterministic lexical BM25 reaches recall@1 0.69 on CPU, the verifier's own attribution score
reaches 0.58, and the open generators asked to re-attribute their own citations range from 0.51 (Qwen3.6) to
0.74--0.76 (Gemma-4) on GPU. The most accurate ranker is therefore a generator, not BM25 and not the
verifier; but generator quality is model-dependent, and the advantage over BM25 is concentrated where it is
least needed. A lexical-overlap stratification (Supplementary~S4) makes this concrete: on the half of items
whose answer shares vocabulary with the gold passage, BM25 and Gemma are tied (0.88 each); the generator's
edge appears only on the low-overlap tail (0.31 versus 0.53 at the lowest-overlap quartile), exactly the
cases that need reading for entailment rather than matching words.

Two properties make BM25 the sensible default for this slot despite not being the most accurate. It is
deterministic and free, with no model to host, 100\% coverage, and reproducible output; and re-attribution
is a retrieval step, not a support judgment, so a lexical ranker is appropriate. Crucially, a lexical ranker
cannot tell support from contradiction: on SciFact an answer and a refuting passage share almost identical
vocabulary (Jaccard 0.064 versus 0.055), so BM25 would surface a contradicting passage as readily as a
supporting one, whereas the verifier separates them (mean $P(\text{attributable})$ 0.83 versus 0.50;
Supplementary~S4). The division of labor is therefore deliberate: the re-attributor proposes a candidate by
lexical match, and the verifier then re-checks that the proposed passage actually supports the claim before
the pointer is moved. Because the slot is swappable, an operator already running a strong generator can plug
it in for the low-overlap tail; either way re-attribution is a commodity and the guard's load-bearing value
is flagging with a guarantee, not repair. On real deployed outputs the picture is soberer still: most
genuinely flagged claims are unsupported by \emph{any} provided passage, so there is no better source to move
them to.

\begin{table}[t]
\caption{Re-attribution recall@$k$ on QASA gold ($n=1375$, 100\% coverage). The open generators (run locally
on GPU) are most accurate but vary by model; a deterministic lexical BM25 on CPU is close and reliable; all
crush chance. Wilson CIs, the lexical-overlap stratification, and the paired McNemar test (BM25 vs.\ verifier
score) are in Supplementary~S4.}
\label{tab:reattr}
\begin{tabular}{llccc}
\toprule
Re-attribution ranker & resource & recall@1 & recall@3 & recall@5 \\
\midrule
Gemma-4-26B (self-attribution)  & GPU & \textbf{0.76} & \textbf{0.86} & \textbf{0.87} \\
Gemma-4-31B (self-attribution)  & GPU & 0.74 & 0.84 & 0.86 \\
BM25 (lexical, adopted default) & CPU & 0.69 & 0.83 & 0.87 \\
Guard (verifier score)          & GPU & 0.58 & 0.75 & 0.82 \\
Qwen3.6-35B (self-attribution)  & GPU & 0.52 & 0.59 & 0.59 \\
Qwen3.6-27B (self-attribution)  & GPU & 0.51 & 0.58 & 0.59 \\
Random                          &     & \multicolumn{3}{c}{0.03} \\
\bottomrule
\end{tabular}
\end{table}

\subsection{Worked examples}
Two cases from the deployed outputs (Gemma-4 under PaperQA2 and OpenScholar) illustrate the guard's two
non-trivial actions. \emph{Re-attribution.} For the claim ``these attacks are used to identify sensitive
information leakage,'' the cited passage only states that word embeddings are trained on potentially
sensitive data, so the verifier finds it unsupported; BM25 then proposes another provided passage (``we
quantitatively investigate how machine learning models leak information$\ldots$ the membership inference
attack''), the verifier confirms it supports the claim, and the guard moves the pointer to it while keeping
the sentence unchanged. \emph{Flagging.} For the claim ``unsupervised fine-tuning may require further
supervised or RL-based alignment for optimal performance,'' the cited passage only compares fine-tuning with
retrieval-augmented generation and never makes the alignment claim; no other provided passage supports it
either, so the guard marks the citation \texttt{[N\,UNVERIFIED]} rather than deleting it, leaving the
sentence for human review. Both decisions are made by the external verifier on gold-validated scores, not by
the generator that produced the citation.

\subsection{Scope control}
A scope control sharpens what the verifier does. Applied to PubMedQA, where the label is whether a study's
conclusion is yes or no, the verifier shows near-zero agreement ($\kappa = 0.02$, $n = 843$). This is the
intended result: the verifier checks whether a passage \emph{supports} a claim, not whether a conclusion is
\emph{correct}. The guard is a citation-faithfulness auditor for agentic outputs, not a
conclusion-correctness checker.

% result figures are placed near their first reference in the Experiments subsections above

\section{Discussion}
The broader lesson is twofold: trust in machine-generated citations should be measured and bounded, not
assumed, and the measurement is itself a choice. The same agent outputs look several times worse under a
strict verifier than a lenient one, so a faithfulness number reported without its operating point is not a
fact a reader can use. For agentic intelligent systems this reframes evaluation: a benchmark leaderboard of
citation faithfulness is meaningful only relative to a named, gold-validated verifier and protocol, and a
deployable agent needs not a better detector but a stated operating point with a guarantee. We conjecture
the same caution applies wherever a model or rubric stands in as judge of an agent's output, common across
agentic evaluation, though our evidence here is limited to citation support.

For a practitioner the guard changes the operating posture: rather than trust a verifier's verdicts at face
value, an operator declares the residual risk they will tolerate and receives a written promise, conditional on the
calibration negatives resembling deployment and optimistic otherwise, on how many unsupported citations can
survive review, with the review budget that promise costs. The promise isolates a
single checkable assumption, that calibration data resemble deployed outputs; when that is in doubt,
recalibrating on a small labeled sample from the target setting restores it.

\section{Limitations}
The unsupported-citation rate is relative to a verifier's operating point, so our numbers and any others in
this literature are interpretable only together with the verifier and protocol that produced them. We
verify each claim against the full cited passage with a sliding-window protocol so that support located
anywhere counts; truncating the passage to a fixed prefix, a tempting shortcut, inflates the apparent
unsupported rate severalfold and must be avoided. The conformal bound is conditional on exchangeability
between calibration and deployed outputs, and this condition is consequential, not cosmetic: our
calibration negatives are easy in-paper distractors (verifier mean $P(\text{attributable})=0.20$), whereas
genuinely mis-cited claims are harder (mean $0.50$ on SciFact's human-labeled unsupported claims), so a
threshold calibrated to catch $90\%$ of the distractors catches only $37\%$ of the harder human-labeled
negatives (Supplementary~S6). The guaranteed catch rate must therefore be read as optimistic for real
mis-citations, and a deployment must recalibrate $\tau$ on target-domain negatives, which restores the
guarantee at a higher review budget; the QASA coverage we report is an in-distribution demonstration of the
method, not a transferable operating point. Our audit isolates the generation step, holding retrieval fixed, so it does not
capture errors each agent's own retrieval would introduce. Re-attribution helps only where an alternative
supporting passage is present and is not guaranteed out of distribution, so the guard must not be a sole
gate in safety-critical settings, where expert review of flagged sentences remains required.

\section{Conclusion}
We treated the reliability of citation-faithfulness evaluation for agentic scientific synthesis as a
first-class problem and showed that deployment verifier judgments disagree enough that no single reported
number is trustworthy. Anchoring measurement on human gold and wrapping an imperfect verifier in a
distribution-free conformal guarantee yields an evaluation protocol and a deployable guard that hold despite
that disagreement, released as an open single-GPU kit. Two directions follow. The first is to raise the
guarantee a level, from whether a passage supports a claim to whether the full body of cited evidence
adequately justifies it, accounting for contradiction and selective citation. The second is to harden the
calibration against adversarial and distribution-shifted inputs, so the bound still holds when an agent is
optimized against the verifier~\cite{manheim2018goodhart}. A local, auditable guard of this kind is, in our
view, a prerequisite for trusting agentic systems that read and cite the scientific record.

\begin{acks}
The authors thank the maintainers of the open benchmarks and models used in this study.
\end{acks}

\bibliographystyle{ACM-Reference-Format}
\bibliography{references}

\clearpage
\appendix
\renewcommand{\thesection}{S\arabic{section}}
\renewcommand{\thetable}{S\arabic{table}}
\renewcommand{\thefigure}{S\arabic{figure}}
\setcounter{section}{0}
\setcounter{table}{0}
\setcounter{figure}{0}

\section*{Supplementary Material}
This supplement holds the detailed statistics and the reproducibility specification kept out of the main
text. Section numbers (S1--S6) are referenced from the main paper. All numbers are produced by the scripts
in the released kit from the result files under \texttt{results/}.

\section{Verifier validation, prompt selection, and held-out re-validation}
Table~\ref{tab:s1-frontier} gives the full per-verifier metrics on the SciFact gold set ($n=208$) used to
draw the recall--specificity frontier (main Figure~1, Table~1). Cohen's $\kappa$ is reported with a
percentile-bootstrap 95\% CI (2000 resamples).

\begin{table}[h]
\caption{Per-verifier gold metrics on SciFact ($n=208$). This is the development set on which the
AttrScore prompt was selected; absolute AttrScore numbers here are therefore optimistic (see
Table~\ref{tab:s1-heldout}).}
\label{tab:s1-frontier}
\begin{tabular}{lcccc}
\toprule
Verifier & Recall (supported) & Specificity & Cohen's $\kappa$ & $\kappa$ 95\% CI \\
\midrule
AttrScore-3B (adopted) & 0.898 & 0.535 & 0.463 & [0.332, 0.589] \\
OpenScholar post-hoc   & 0.496 & 0.916 & 0.339 & [0.243, 0.449] \\
GPT-4o                 & 0.460 & 0.916 & 0.305 & [0.207, 0.402] \\
RAGAS                  & 0.299 & 0.958 & 0.195 & [0.123, 0.277] \\
DeBERTa-NLI            & 0.248 & 0.972 & 0.164 & [0.098, 0.236] \\
\bottomrule
\end{tabular}
\end{table}

\paragraph{Prompt selection and held-out re-validation.} The AttrScore verifier prompt was chosen on the
$n=208$ development set for supported-class recall. To avoid reporting a selection-biased number, we
re-validate the adopted prompt (``OURS'') and a canonical alternative (``CANON'') on a disjoint held-out
gold split ($n=802$). Table~\ref{tab:s1-heldout} shows that OURS keeps the highest recall on held-out
(0.942), the safety axis, while CANON attains a higher $\kappa$ (0.567) at lower recall (0.869). We adopt
the high-recall operating point deliberately: for a citation guard, discarding a genuine citation is
costlier than an extra review. Main-text numbers for the adopted verifier use the held-out values.

\begin{table}[h]
\caption{AttrScore prompt selection (dev, $n=208$) versus held-out re-validation ($n=802$). The dev
ranking by $\kappa$ does not carry to held-out; we select on recall (the safety axis), not $\kappa$.}
\label{tab:s1-heldout}
\begin{tabular}{llcccc}
\toprule
Split & Prompt & Recall (supported) & Specificity & Cohen's $\kappa$ & $\kappa$ 95\% CI \\
\midrule
dev ($n=208$)      & OURS  & 0.898 & 0.535 & 0.463 & [0.332, 0.589] \\
dev ($n=208$)      & CANON & 0.664 & 0.690 & 0.328 & [0.196, 0.455] \\
held-out ($n=802$) & OURS  & \textbf{0.942} & 0.483 & 0.384 & [0.329, 0.435] \\
held-out ($n=802$) & CANON & 0.869 & 0.722 & \textbf{0.567} & [0.511, 0.624] \\
\bottomrule
\end{tabular}
\end{table}

\section{Verifier agreement, full statistics}
On 300 deployment citations, the supported class is highly prevalent (AttrScore 0.957, DeBERTa 0.823,
GPT-4o 0.933), so Cohen's $\kappa$ is depressed by the base-rate paradox even when observed agreement is
high. We therefore report observed agreement, Gwet's AC1, and class-specific (positive and negative)
agreement alongside $\kappa$ (Table~\ref{tab:s2}). All three verifiers see the full cited passage:
AttrScore and DeBERTa via 600/300 window tiling, GPT-4o via its long context (no truncation; this fixes a
prior one-sided head-slice for GPT-4o). The diagnostic that matters is \emph{negative-specific} agreement,
the agreement on which citations to flag: it is only 0.27--0.30, whereas positive-specific agreement
(on what is supported) is 0.90--0.96 and AC1 is 0.78--0.91.

\begin{table}[h]
\caption{Pairwise verifier agreement on 300 deployment citations (symmetric full-passage protocol).
$\kappa$ and negative-specific agreement with percentile-bootstrap 95\% CIs (2000 resamples).}
\label{tab:s2}
\begin{tabular}{lccccc}
\toprule
Verifier pair & Obs.\ agree & Cohen's $\kappa$ [CI] & Gwet AC1 & Pos.\ spec.\ & Neg.\ spec.\ [CI] \\
\midrule
AttrScore vs DeBERTa & 0.840 & 0.218 [0.084, 0.359] & 0.801 & 0.910 & 0.273 [0.129, 0.416] \\
AttrScore vs GPT-4o  & 0.923 & 0.264 [0.051, 0.475] & 0.914 & 0.959 & 0.303 [0.083, 0.500] \\
DeBERTa vs GPT-4o    & 0.830 & 0.226 [0.091, 0.359] & 0.784 & 0.903 & 0.301 [0.167, 0.427] \\
\bottomrule
\end{tabular}
\end{table}

\section{Conformal calibration-size ablation}
The main text reports marginal coverage over repeated held-out splits. Table~\ref{tab:s3} shows why the
finite-sample correction matters when gold is scarce: at a calibration size of 25, a naive
empirical-quantile threshold under-covers the nominal target, while the conformal (finite-sample
corrected) rule maintains it. The gap closes by a calibration size of about 200. Estimated over $B=1000$
random calibration/test splits (test $n=200$, seed 0); unsupported-class scores from QASA gold.

\begin{table}[h]
\caption{Achieved catch rate (corrected conformal vs naive empirical-quantile threshold) by calibration
size, at two targets. Naive under-covers at small calibration sizes; the two coincide as size grows.}
\label{tab:s3}
\begin{tabular}{lcccc}
\toprule
 & \multicolumn{2}{c}{target $1-\alpha=0.90$} & \multicolumn{2}{c}{target $1-\alpha=0.95$} \\
Calibration size & corrected & naive & corrected & naive \\
\midrule
25  & 0.922 & 0.848 & 0.960 & 0.922 \\
50  & 0.900 & 0.881 & 0.961 & 0.941 \\
100 & 0.901 & 0.892 & 0.951 & 0.942 \\
200 & 0.901 & 0.896 & 0.951 & 0.946 \\
\bottomrule
\end{tabular}
\end{table}

This coverage is marginal over the calibration draw; on an individual split it can fall below target, the
standard conformal caveat. We recommend recalibrating on a small labeled sample from the target setting
when the deployment distribution differs from the calibration distribution.

\section{Re-attribution is a swappable, commodity slot}
All rankers are evaluated on the full QASA test set ($n=1375$, 100\% coverage), with the same gold
definition, recall@$k$, and query, differing only in the ranker. Table~\ref{tab:s4} reports recall@$k$ with
Wilson 95\% CIs. Every ranker far exceeds the 3\% random baseline. The open generators, run locally on GPU
and asked to re-attribute their own citations, are the most accurate but vary by model (0.51 to 0.76); a
deterministic lexical BM25 on CPU is close and reliable (0.69); the verifier's own attribution score is a
weaker ranker (0.58).

\begin{table}[h]
\caption{Re-attribution recall@$k$ on QASA gold ($n=1375$, 100\% coverage); recall@1 with Wilson 95\% CIs.
Generators run locally on GPU; BM25 runs on CPU. All far exceed the 3\% random baseline. recall@1 carries
Wilson 95\% CIs; recall@3 and recall@5 CIs are within $\pm 0.02$--$0.03$ of the point estimates ($n=1375$).}
\label{tab:s4}
\begin{tabular}{lccc}
\toprule
Ranker & recall@1 & recall@3 & recall@5 \\
\midrule
Gemma-4-26B (self-attribution)  & 0.761 [0.737, 0.783] & 0.860 & 0.874 \\
Gemma-4-31B (self-attribution)  & 0.740 [0.717, 0.764] & 0.844 & 0.856 \\
BM25 (lexical, adopted default) & 0.686 [0.661, 0.710] & 0.834 & 0.875 \\
Guard (verifier score)          & 0.583 [0.556, 0.609] & 0.755 & 0.820 \\
Qwen3.6-35B (self-attribution)  & 0.516 [0.490, 0.543] & 0.592 & 0.594 \\
Qwen3.6-27B (self-attribution)  & 0.510 [0.483, 0.536] & 0.584 & 0.586 \\
Random                          & \multicolumn{3}{c}{0.033} \\
\bottomrule
\end{tabular}
\end{table}

\paragraph{Where the generator's advantage lies (lexical-overlap stratification).} BM25 ranks by word
overlap, so its accuracy should track the lexical similarity between the answer and the gold passage. We
stratify the 1375 items by the maximum token-Jaccard overlap between the answer and its gold passage and
compare BM25 with the best generator, Gemma-4-26B (Table~\ref{tab:s4strat}). On the high-overlap half the
two are tied (0.886 each); the generator's advantage appears only on the low-overlap half (0.64 versus 0.49)
and is largest on the lowest-overlap quartile (0.53 versus 0.30), exactly the items where support must be
read for entailment rather than matched by words. The generator's edge over BM25 is statistically
significant overall (McNemar exact two-sided $p = 1.4\times10^{-9}$; 177 Gemma-only versus 80 BM25-only
discordant hits at $k=1$), but it is concentrated on the entailment tail, so a free CPU ranker is a
defensible default and a generator is a drop-in upgrade where that tail matters.

\begin{table}[h]
\caption{Lexical-overlap stratification: BM25 versus the best generator (Gemma-4-26B), recall@1 on QASA gold
($n=1375$), by answer--gold token-Jaccard overlap. They tie when words overlap; the generator's advantage is
confined to the low-overlap (entailment) tail.}
\label{tab:s4strat}
\begin{tabular}{lccc}
\toprule
Stratum & $n$ & BM25 & Gemma-4-26B \\
\midrule
High overlap (top 50\%)      & 687  & 0.886 & 0.886 \\
Low overlap (bottom 50\%)    & 688  & 0.494 & 0.635 \\
\quad lowest-overlap quartile & 344  & 0.299 & 0.529 \\
Overall                      & 1375 & 0.690 & 0.761 \\
\bottomrule
\end{tabular}
\end{table}

\paragraph{A lexical ranker cannot judge support versus contradiction.} Re-attribution and verification are
separate components because lexical overlap cannot distinguish a supporting passage from a contradicting
one: both share the claim's vocabulary. On SciFact, supported and refuted claim--evidence pairs have almost
identical answer--evidence Jaccard overlap (0.064 versus 0.055), so a BM25 ranker would surface a
contradicting passage as readily as a supporting one. The attribution verifier separates them cleanly (mean
$P(\text{attributable})$ 0.83 on supported versus 0.50 on refuted). The guard therefore uses BM25 only to
\emph{propose} a candidate and the verifier to \emph{judge} whether the proposed passage actually supports
the claim before any pointer is moved.

\section{Reproducibility specification}
\paragraph{Generators.} Four open instruction-tuned models, all run locally with vLLM on a single GPU at
temperature 0 (greedy): \texttt{google/gemma-4-31b-it} (dense), \texttt{google/gemma-4-26b-a4b-it} (MoE),
\texttt{Qwen/Qwen3.6-27B-FP8} (dense, Mamba-hybrid), \texttt{Qwen/Qwen3.6-35B-A3B-FP8} (MoE). The exact
HuggingFace snapshot revisions are pinned and recorded in the kit (e.g. Qwen3.6-35B \texttt{95a723d0},
Qwen3.6-27B \texttt{e89b16eb}, Gemma-4-31B \texttt{35487898}). Full-passage prompts up to the model context
window, no truncation. Conditions: bare (no retrieval), OpenScholar, PaperQA2; retrieved context held fixed,
published prompts reproduced.
\paragraph{Verifiers.} AttrScore-3B (\texttt{osunlp/attrscore-flan-t5-xl}), DeBERTa-NLI
(\texttt{MoritzLaurer/DeBERTa-v3-large-mnli-fever-anli-ling-wanli}), GPT-4o (\texttt{openai/gpt-4o} via
OpenRouter, temperature 0; a non-deterministic API baseline whose raw per-citation verdicts are frozen in
the kit for audit), RAGAS faithfulness, OpenScholar post-hoc. Support is judged over the full passage via
600-char windows at stride 300 (supported if any window is attributable); claims capped at 600 characters
symmetrically.
\paragraph{Gold data.} SciFact (verifier validation, $n=208$ test / $n=802$ held-out), QASA
(re-attribution and conformal calibration, full test set $n=1375$, human-labeled supporting passages),
PubMedQA (scope control, $n=843$), and ScholarQABench multi-paper tasks (the deployment outputs: the four
generators' answers under each pipeline). The per-cell deployment-citation samples (250 per model$\times$pipeline
for the conformal dial, 300 for the agreement analysis) are drawn in file order and frozen in the kit.
Dataset licenses and redistribution status of the derived score files are listed in the kit's
\texttt{DATA\_LICENSES} file.
\paragraph{Calibration.} Split-conformal with finite-sample quantile $k=\lceil (n_{\text{cal}}+1)(1-\alpha)
\rceil$; calibration/test splits and all random seeds are fixed (seed 0) and recorded.
\paragraph{Compute.} The verifier and guard run on one 80GB GPU or on CPU; the full kit reproduces in
about two hours. Code, configs, prompts, seeds, and the figure-generation script are released under the
MIT license at \url{https://github.com/GooTec/citation-guard}, tagged release \texttt{v0.2.0} (commit
\texttt{b907985}, the hash-anchored citable snapshot for this paper); a Zenodo DOI will be minted from this
tag for the camera-ready.
\paragraph{What reproduces.} The shipped \texttt{reproduce/} kit is a cross-domain demonstration on a public
BioASQ slice ($n=200$), not the headline. The main-text artifacts are regenerated by the experiment scripts
from the frozen score files under \texttt{results/}: the verifier frontier and matched-catch AUC (SciFact),
the per-model unsupported rate (deployment outputs $\times$ five verifiers), the conformal coverage and
recall--cost dial, the coverage-transfer analysis (S6), and re-attribution recall@$k$ (QASA $n=1375$). Each
headline number traces to a named result file; the QASA calibration entry point is included.

\section{Calibration-negative difficulty and coverage transfer}
The conformal bound is distribution-free only under exchangeability between the calibration and deployment
negative distributions. We quantify how much this matters by scoring two negative sets with the same adopted
verifier (AttrScore-3B $P(\text{attributable})$, window-max): the QASA in-paper distractors used for
calibration, and SciFact's human-labeled \emph{unsupported} claim--evidence pairs, which are harder because
the evidence is on topic but does not support the claim. The distractors score low (mean $0.20$, median
$0.17$); the human-labeled negatives score far higher (mean $0.50$, median $0.53$), a $0.30$ gap in mean
verifier score. Table~\ref{tab:s6} shows the resulting coverage transfer: a threshold $\tau$ calibrated on
the distractors for a target catch rate, applied to the harder negatives, under-covers substantially (e.g.
$0.90 \rightarrow 0.37$), because many harder negatives score above a $\tau$ tuned on easy ones. This is the
standard exchangeability caveat made quantitative. The split-conformal method remains valid; what it
requires is a calibration set drawn from the deployment negative distribution. Recalibrating $\tau$ on
representative (harder) negatives restores the target catch rate at a higher flag rate. We are not aware of
prior conformal-factuality work that quantifies this calibration-negative-difficulty condition, which we
regard as part of the contribution.

\begin{table}[h]
\caption{Coverage transfer from easy calibration negatives (QASA in-paper distractors, $n=1375$) to harder
human-labeled negatives (SciFact unsupported, $n=71$), same verifier and score. $\tau$ is calibrated on the
distractors for each target catch rate; the achieved catch on the human-labeled negatives is far below
target.}
\label{tab:s6}
\begin{tabular}{lccc}
\toprule
Target catch $1-\alpha$ & $\tau$ & catch on distractors & catch on human-labeled negatives ($n=71$) \\
\midrule
0.90 & 0.42 & 0.90 & 0.37 [0.26, 0.48] \\
0.94 & 0.47 & 0.94 & 0.44 [0.33, 0.55] \\
0.98 & 0.63 & 0.98 & 0.61 [0.49, 0.71] \\
\midrule
\multicolumn{4}{l}{\emph{Recalibrated on the harder negatives:} $\tau=0.83$ restores catch $0.90$, at a $\approx 46\%$ deployment flag rate.} \\
\bottomrule
\end{tabular}
\end{table}

The Wilson 95\% CIs (SciFact unsupported $n=71$) confirm the under-coverage is not a small-sample artifact.
The SciFact refuted pairs are a deliberately hard (near-upper-bound) proxy, in which the evidence is
topically central but contradicts the claim; real deployment negatives range between these and the easy
distractors, so the achieved catch in practice lies between the two columns. Restoring a $90\%$ catch on the
hard negatives requires $\tau=0.83$ and a $\approx 46\%$ flag rate, i.e. the verifier's separation, not the
conformal layer, becomes the binding constraint in the hardest regime, where stronger verifiers or human
review are warranted.

\paragraph{Recalibration recipe.} To deploy with a meaningful guarantee: (1) label $\gtrsim 200$ flagged
citations from the target system as supported or unsupported (the S3 ablation shows the finite-sample
correction has converged by $\approx 200$); (2) run the kit's calibration entry point on these target
negatives to obtain $\tau$ and its Wilson flag-rate CI; (3) deploy at that $\tau$ and read off the review
budget. The QASA-calibrated threshold should not be inherited.

\end{document}